\newcommand{\per}{\mathrm{PER}}
\newcommand{\mer}{\mathrm{MER}}
\newcommand{\mmer}{\mathrm{MMER}}
\newtheorem{defi}{Definition}
\newtheorem{prop}{Proposition}
\journal{European Journal of Operational Research}
\begin{document}

\begin{frontmatter}


\title{Bayesian preference elicitation for multiobjective combinatorial optimization}



\author[sorbonne]{Nadjet Bourdache}
\author[sorbonne]{Patrice Perny}
\author[sorbonne]{Olivier Spanjaard}

\address[sorbonne]{Sorbonne Universit\'e, CNRS, LIP6, F-75005 Paris, France,\\ email: name.surname@lip6.fr}

\begin{abstract}
We introduce a new incremental preference elicitation procedure able to deal with noisy responses of a Decision Maker (DM). The originality of the contribution is to propose a Bayesian approach for determining a preferred solution in a multiobjective decision problem involving a \emph{combinatorial} set of alternatives. We assume that the preferences of the DM are represented by an aggregation function whose parameters are unknown and that the uncertainty about them is represented by a density function on the parameter space. Pairwise comparison queries are used to reduce this uncertainty (by Bayesian revision). The query selection strategy is based on the solution of a mixed integer linear program with a combinatorial set of variables and constraints, which requires to use columns and constraints generation methods. Numerical tests are provided to show the practicability of the approach. 
\end{abstract}

\begin{keyword}
Multiple objective programming\sep Bayesian preference elicitation \sep weighted sum \sep ordered weighted average


\end{keyword}

\end{frontmatter}



\section{Introduction}

The increasing complexity of problems encountered in applications is  a permanent motivation for the development of intelligent systems for human decision support.
Among the various difficulties to overcome for decision making in complex environments we consider here three sources of complexity that often coexist in a decision problem:  1) the combinatorial nature of the set of feasible alternatives  2) the fact that multiple points of view, possibly conflicting, about the value of solutions may coexist, 3) the need of formulating recommendations that are tailored to the objectives and preferences of users and that takes into account the uncertainty in preference elicitation (due to possible mistakes in the responses of users to preference queries).

The first difficulty occurs as soon as the solutions to be compared are characterized by the combinations of elementary decisions. This is the case for instance for the selection problem of an optimal subset within a reference set, under a budget constraint (a.k.a. knapsack problem) where a solution is characterized by elementary decisions concerning items of the reference set. This difficulty prevents the explicit evaluation of all solutions and the determination of the best option requires implicit enumeration techniques. The second difficulty appears in multiagent decision contexts when the agents have different individual value systems or objectives leading to possibly conflicting preferences. It also appears in single-agent decision contexts when the alternatives are assessed w.r.t. different criteria. Finally, it appears in decision under uncertainty when several scenarios that have different impacts on the outcomes of the alternatives are considered. In all these situations, preference modeling requires the definition of multiple objectives to be optimized simultaneously. The combination of difficulties 1 and 2 is at the core of multiobjective combinatorial optimization \cite{Ehrgott05}.

Let us now come to the third difficulty. The coexistence of multiple objectives makes the notion of optimality subjective and requires additional preference information to be collected from the users in order to discriminate between Pareto-optimal solutions. In multiobjective decision problems, the ``optimal'' solution fully depends on the relative importance attached to the different objectives under consideration and on how performances are aggregated. A standard tool used to generate compromise solutions tailored to the decision maker (DM) value system is to optimize a parameterized aggregation function summarizing the performance vector of any solution into a scalar value. This makes it possible to reformulate the initial problem as a single-objective optimization problem (see e.g., \cite{Steuer86}). However, a precise specification of the preference parameters (e.g., weighting coefficients), prior to the exploration of the set of alternatives, may be cumbersome because it requires a significant amount of preference information. To overcome this problem, incremental decision procedures aiming to integrate and combine the elicitation of preference parameters and the exploration of the set of feasible solutions are appealing (alternatively, one may also consider the approach consisting in computing the non-dominated solutions according to a scalarizing
function whose parameters are only partially specified \cite{Kaddani17}). They make it possible to focus the elicitation burden on the information that is really useful to separate competing solutions during the optimization process, and this significantly reduces the number of queries asked to the user.

In the fields of operations research and  artificial intelligence, numerous contributions have addressed the problem of incrementally eliciting preferences. A first stream of research concerns preference elicitation for decision making in explicit sets (i.e., non-combinatorial problems), to assess multiattribute utility functions \cite{WhiteSD84}, weights of criteria in aggregation functions \cite{BenabbouPV17}, multicriteria sorting models \cite{ozpeynirci2018interactive}, utility functions for decision making under risk \cite{Chajewska00,WangBoutilier03,hines10,PernyVB16}, or individual utilities in collective decision making \cite{LuB11}.
Preference elicitation for decision support on combinatorial domains is a challenging issue that has also been studied in various contexts
such as constraint satisfaction \cite{gelain10}, matching under preferences \cite{DrummondB14}, sequential decision making under risk \cite{Regan11,WengP13,gilbertSVW15,BenabbouP17}, and multiobjective combinatorial optimization \cite{branke2016using,Benabbou18,BourdacheP19}.

Almost all incremental elicitation procedures mentioned above proceed by progressive reduction of the parameter space until an optimal decision can be identified. At every step of the elicitation process, a preference query is asked to the DM and the answer induces a constraint on the parameter space, thus a polyhedron including all parameter values compatible with the DM's responses is updated after each answer (\emph{polyhedral method} \cite{toubia04}). Queries are selected to obtain a fast reduction of the parameter space, in order to enforce a fast determination of the optimal solution. However, such procedures do not offer any opportunity to the DM to revise her opinion about alternatives and the final result may be sensitive to errors in preference statements.

A notable exception in the list of contributions mentioned above is the approach proposed by Chajewska et al. \cite{Chajewska00}. The approach relies on a prior probabilistic distribution over the parameter space and uses preference queries over gambles to update the initial distribution using Bayesian methods. It is more tolerant to errors and inconsistencies over time in answering preference queries. The difficulties with this approach may lie in the choice of a prior distribution and in the computation of Bayesian updates at any step of the procedure. A variant, proposed in \cite{GuoS10}, relies on simpler questions under certainty, so as to reduce the cognitive load.

 \paragraph{Motivation of the paper} As far as we know, the works mentioned in the last paragraph has not been extended for decision making on combinatorial domains. Our goal here is to fill the gap and to propose a Bayesian approach for determining a preferred solution in a multiobjective combinatorial optimization problem. The main issue in this setting is the determination of the next query to ask to the DM, as there is an exponential number of possible queries (due to the combinatorial nature of the set of feasible solutions).
 
 \paragraph{Related work} Several recently proposed Bayesian preference elicitation methods may be related to our work.\\ [0.5ex]
 -- Saur\'e and Vielma \cite{saure19} proposed an error tolerant variant of the polyhedral method, where the polyhedron is replaced by an ellipsoidal credibility region computed from a multivariate normal distribution on the parameter space. This distribution, and thus the ellipsoidal  credibility region, is updated in a Bayesian manner after each query. In contrast with their work, where the set of alternatives is explicitly defined, our method applies on implicit sets of alternatives. Besides, although our method also involves a multivariate normal density function on the parameter space, our query selection strategy is based on the whole density function and not only on a credibility region.\\ [0.5ex]
 -- Vendrov et al. \cite{VeLHB20} proposed a query selection procedure able to deal with large sets of alternatives (up to hundreds of thousands) based on \emph{Expected  Value Of Information} (EVOI). The EVOI criterion consists in determining a query maximizing the expected utility of the recommended alternative conditioned on the DM's answer (where the probability of each answer depends on a response model, e.g. the logistic response model). However, the subsequent optimization problem becomes computationally intractable with a large set of alternatives. The authors consider a continuous relaxation of the space of alternatives that allows a gradient-based approach. Once a query is determined in the relaxed space, the corresponding pair of fictive alternatives is projected back into the space of feasible alternatives. In addition, a second contribution of the paper is to propose an elicitation strategy based on \emph{partial comparison queries}, i.e. queries involving partially specified multi-attribute alternatives, which limits the cognitive burden when the number of attributes is large. We tackle here another state-of-the-art query selection strategy that aims at minimizing the max regret criterion (instead of maximizing the EVOI criterion), a popular measure of recommendation quality.\\ [0.5ex]
 -- In a previous work \cite{BourdachePS19}, we introduced an incremental elicitation method based on Bayesian linear regression for assessing the weights of rank-dependent aggregation functions used in decision theory (typically OWA and Choquet integrals). The query selection strategy we proposed is based on the min max regret criterion, similarly to the one we use in the present work. However, the method can only be applied to \emph{explicit} sets of alternatives and does not scale to combinatorial domains. The computation of regrets in the provided procedure (in order to determine the next query) requires indeed the enumeration of all possible pairs of solutions for each query, which is impractical if the set of solutions is combinatorial in nature. The change in scale is considerable. For illustration, instances involving 100 alternatives were considered in the numerical tests of our previous work \cite{BourdachePS19} while in the multi-objective knapsack instances under consideration in Section 4 of the present paper, there are about $2^{99}$ feasible solutions, among which several millions are Pareto optimal. In order to scale to such large problems, we propose a method based on mixed integer linear programming that allows us to efficiently compute MER and MMER values on combinatorial domains.
 

\paragraph{Organization of the paper} In Section 2, we describe the incremental elicitation procedure proposed in \cite{BourdachePS19} for the determination of an optimal solution over \emph{explicit} sets of solutions and we point out the main issues to overcome to extend the approach to combinatorial sets of solutions. Section 3 is devoted to the method we propose to compute expected regrets on combinatorial domains. The results of numerical tests we carried out are presented in Section 4, that show the operationality of the proposed procedure.

\section{Incremental elicitation}
\label{sec:incremental}


We first provide a general overview of  the incremental Bayesian elicitation procedure on an \emph{explicit set} and then discuss the extension to cope with \emph{combinatorial optimization} problems. 

Let $\mathcal{X}$ denote the set of possible solutions. Since we are in the context of multiobjective optimization, we assume that a utility vector $u(x)\!\in\!\mathbb{R}^n$ is assigned to any solution $x\!\in\! \mathcal{X}$. Then we consider the problem of maximizing over $\mathcal{X}$ a scalarizing function of the form $f_w(x)\!=\!\sum_{k=1}^n w_k g_k(u(x))$ where $w_k$ are positive weights and $g_k\!:\!\mathbb{R}^n\!\rightarrow\!\mathbb{R}$ are \emph{basis functions} \cite{bishop2006pattern} (introduced to extend the class of linear models to nonlinear ones). 
For simplicity, the reader can assume that $g_k(u(x))\!=\!u_k(x)$, i.e., the $k$-th component of $u(x)$, and $w_k$ is the (imperfectly known) weight of criterion $k$ in the following. 
At the start of the procedure, a prior density function $p$ is associated to the parameter space $W\!=\!\{w\in[0,1]^n|\sum_kw_k\!=\!1\}$, where the unknown weighting vector $w$ takes value. Then, at each step, the DM responds to a pairwise comparison query and, based on this new preference information, the density function is updated in a Bayesian manner. The aim 
is, in a minimum number of queries, to acquire enough information about the weighting vector $w$ to be able to recommend a solution $x\!\in\!\mathcal{X}$ that is near optimal.
We present here the three main parts of the decision process: query selection strategy, Bayesian updating after each query and stopping condition.

\subsection{Query selection strategy} \label{subsec:qss}
At each step of the algorithm, a new preference statement is needed to update the density function on $W$. In order to select an informative query, we use an adaptation of the \emph{Current Solution Strategy} (CSS) introduced in \cite{BoutilierPPS06} and based on regret minimization. In our probabilistic setting, regrets are replaced by \emph{expected} regrets. Before describing more precisely the query selection strategy, we recall some definitions about expected regrets \cite{BourdachePS19}.


\begin{defi}
Given a density function $p$ on $W$ and two solutions $x$ and $y$, the pairwise expected regret is defined as follows:
\[
\per(x,y,p) = \int \max\{0,f_{w}(y)\!-\!f_{w}(x)\}p({w}) d{w}.
\]
\end{defi}

In other words, the Pairwise Expected Regret (PER) of $x$ with respect to $y$ represents the expected utility loss when recommending solution $x$ instead of solution $y$. In practice, the PER is approximated using a sample $S$ of weighting vectors drawn from $p$. This discretization of $W$ enables us to convert the integral into an arithmetic mean: 
\begin{equation}\label{eq:per}
    \per(x,y,S)=\frac{1}{|S|}\sum_{w \in S}\max\{0,f_{w}(y)-f_{w}(x)\}    
\end{equation}

\begin{defi}
Given a set $\mathcal{X}$ of solutions and a density function $p$ on $W$, the max expected regret of $x\!\in\!\mathcal{X}$ and the minimax expected regret over $\mathcal{X}$ are defined by:
\[
\begin{aligned}
\mer(x,\mathcal{X},p)&=\max_{y\in \mathcal{X}} \per(x,y,p),\\
\mmer(\mathcal{X},p)&=\min_{x \in \mathcal{X}} \mer(x,\mathcal{X},p).
\end{aligned}
\]
\end{defi}

Put another way, the max expected regret of $x$ is the maximum expected utility loss incurred in selecting $x$ in $\mathcal{X}$ while the minimax expected regret is the minimal max expected regret value of a solution in $\mathcal{X}$. As for the PER computation, the MER and the MMER can be approximated using a sample $S$ of weight vectors: 
\begin{equation}\label{eq:mer}
\mer(x,\mathcal{X},S)=\max_{y\in \mathcal{X}} \per(x,y,S)
\end{equation}
\begin{equation}\label{eq:mmer}
\mmer(\mathcal{X},S)=\min_{x \in \mathcal{X}} \mer(x,\mathcal{X},S)
\end{equation}

We can now describe the adaptation of CSS to the probabilistic setting. The max expected regret of a solution is used to determine which solution to recommend (the lower, the better) in the current state of knowledge characterized by $p$.
At a step $i$ of the elicitation procedure, if the stopping condition (that will be defined below) is met, then a solution $x^{(i)}\!\in\!\arg\min_{x \in \mathcal{X}} \mer(x,\mathcal{X},S)$ is recommended. 
But if the knowledge about the value of $w$ needs to be better specified to make a recommendation, the DM is asked to compare $x^{(i)}$ to its best challenger 
$y^{(i)}\!\in\!\arg\max_{y\in \mathcal{X}} \per(x^{(i)},y,S)$ (best challenger in the current state of knowledge). In the next subsection, we describe how one uses the DM's answer to update the density function $p$.

\subsection{Bayesian updating}\label{subsec:bayesianUpsating}

At step $i$ of the procedure, a new query of the form ‘‘$x^{(i)} \succsim y^{(i)}?$" is asked to the DM. Her answer is translated into a binary variable $a^{(i)}$ that takes value $1$ if the answer is yes and $0$ otherwise. Using Bayes' rule, the posterior density function reads as follows: 
\begin{equation}\label{eq:bayesRule}
    p(w|a^{(i)}) = \frac{p(w)p(a^{(i)}|w)}{p(a^{(i)})}
\end{equation}
where $p(w)$ is assumed to be multivariate Gaussian (the initialization used for $p(w)$ will be specified in the numerical tests section). The posterior density function $p(w|a^{(i)})$ is hard to compute analytically using Equation \ref{eq:bayesRule}. Indeed, the likelihood $p(a^{(i)}|w)$ follows a Bernoulli distribution and no conjugate prior is known for this likelihood function in the multivariate case. Therefore, one uses a data augmentation method \cite{albert93} that consists in introducing a latent variable $z^{(i)}\!=\!w^Td^{(i)}\!+\!\varepsilon^{(i)}$ that represents the utility difference between the two compared solutions, where $w$ is a given weighting vector, $d^{(i)}$ is an explanatory variable defined by $d^{(i)}\!=\!x^{(i)}\!-\!y^{(i)}$ and $\varepsilon^{(i)}\!\sim\! \mathcal{N}(0,\sigma)$ is a Gaussian noise accounting for the uncertainty about the DM's answer. The Gaussian nature of the density function for $\varepsilon^{(i)}$ implies that the conditional distribution $z^{(i)}|w$ is also Gaussian: $z^{(i)}|w\!\sim\!\mathcal{N}(w^T d^{(i)},\sigma)$. In order to make $z^{(i)}$ consistent with the DM's answer, one forces $z^{(i)}\!\ge\!0$ if $a^{(i)}\!=\!1$ and $z^{(i)}\!<\!0$ otherwise. Thus, one obtains the following truncated density function:
\[
p(z^{(i)}|w, a^{(i)})  \propto \left\{
\begin{array}{ll}
	\mathcal{N}(w^T d^{(i)}, \sigma) \mathds{1}_{z^{(i)} \ge 0} &  \text{if } a^{(i)} = 1\\
	\mathcal{N}(w^T d^{(i)}, \sigma) \mathds{1}_{z^{(i)} < 0} &  \text{otherwise}
\end{array}
\right.
\]
Using the latent variable, the posterior distribution $p(w|a^{(i)})$ is formulated as:
\begin{equation} \label{eq:p_w_ai}
    p(w|a^{(i)})\!=\!\!\!\int\!p(w,z|a^{(i)})dz\!=\!\!\!\int\!p(w|z)p(z|a^{(i)})dz
\end{equation}
If the prior density $p(w)$ is multivariate Gaussian then $p(w|z)$ is multivariate Gaussian too, as well as $p(w|a^{(i)})$. To approximate $p(w|a^{(i)})$, Tanner and Wong proposed in \cite{tanner87} an iterative procedure based on the fact that $p(z|a^{(i)})$ depends in turn on $p(w|a^{(i)})$:
\begin{equation} \label{eq:p_z_ai}
    p(z|a^{(i)})=\int p(\omega|a^{(i)})p(z|\omega,a^{(i)})~d\omega
\end{equation}
The procedure consists in solving the fixed point equation obtained by replacing $p(z|a^{(i)})$ in Equation \ref{eq:p_w_ai} by its expression in Equation \ref{eq:p_z_ai}.
More precisely, ones performs alternately series of samples of $m$ values $z_1, \dots, z_m$ from $z|a^{(i)}$, and updating of the posterior density function by $p(w|a^{(i)})\!=\!\frac{1}{m}\sum_{j=1}^m p(w|z_j)$, which is Gaussian because every $w|z_j, j\!\in\!\llbracket1, m\rrbracket$ is Gaussian. Note that each value of the sample $z_1, \dots, z_m$ is obtained by iteratively drawing a value $w$ from the current distribution $p(w)$ then drawing $z_j$ from $p(z_j|w, a^{(i)})$ for all $j$. For more details, we refer the reader to Algorithm $2$ in \cite{BourdachePS19}.

\subsection{Stopping condition} \label{subsec:stoppingCond}
The principle of the incremental elicitation procedure is to alternate queries and update operations on the density $p(w)$ until the uncertainty about the weighting vector $w$ is sufficiently reduced to be able to make a recommendation with a satisfactory confidence level. A stopping condition that satisfies this specification consists in waiting for the $\mmer(\mathcal{X}, S)$ value to drop below a predefined threshold, which can be defined as a percentage of the initial $\mmer$ value.


\subsection{Main obstacles for extending the approach}

The main obstacles encountered while managing to extend the approach to a combinatorial setting are related to the computation of MER and MMER values as they are defined in Equations \ref{eq:mer} and \ref{eq:mmer}:
\begin{itemize}
\item both values requires an exponential number of pairwise comparisons to be computed (because there is an exponential number of feasible solutions);
\item in addition, the use of linear programming to compute these values is not straightforward because the constraint $\max\{0,.\}$ in Equation \ref{eq:per} is not linear.
\end{itemize}
These issues are all the more critical given that the MER and MMER values are computed at every step of the incremental elicitation procedure to determine whether it should be stopped or not, and to select the next query.



\section{Computation of regrets}\label{sec:mmerComputation}

While the use of mathematical programming is standard in minmax regret optimization, the framework of minmax \emph{expected} regret optimization is more novel.
We propose here a new method to compute $\mer(x, \mathcal{X}, S)$ and $\mmer(\mathcal{X}, S)$ by mixed integer linear programming, where $\mathcal{X}$ is implicitly defined by a set of linear constraints and $S$ is a sample drawn from the current density $p(w)$. We consider in this section that $f_w(x)$ is linear in $u(x)$, but the presented approach is adaptable to non-linear aggregation functions if there exists appropriate linear formulations (e.g., the linear formulation of the ordered weighted averages \cite{Ogryczak03}). We also assume that $f_w(x)\!\in\! [0,1]$.

\subsection{Linear programming for MER computation}
To obtain a linear expression for $\mer(x, \mathcal{X}, S)$, we replace the function $\max\{0,f_w(y)-f_w(x)\}$ in Equation \ref{eq:per} by $b_w[f_w(y)-f_w(x)]$ for each weighting vector $w\!\in\!S,$ where $b_w$ is a binary variable such that $b_w\!=\!1$ if $f_w(y)-f_w(x)\!>\!0$ and $b_w\!=\!0$ if $f_w(y)-f_w(x)\!<\!0$ (the value of $b_w$ does not matter if $f_w(y)-f_w(x)\!=\!0$, because $b_w[f_w(y)-f_w(x)]\!=\!0$ anyway). For this purpose, we need the following additional constraints:
\[
    \left\{ 
    \begin{array}{llr}
        b_w \le f_w(y)-f_w(x)+1 & \forall w \in S & \qquad (c_\le)\\
        b_w \ge f_w(y)-f_w(x)   & \forall w \in S & \qquad (c_\ge)
    \end{array}
    \right.
\]
\begin{prop} \label{prop:max0}
    Given $w\!\in\!S$, $x\!\in\!\mathcal{X}$ and $y\!\in\!\mathcal{X}$, if $f_w$ is an aggregation function defined such that $f_w(z)\!\in\![0,1]$ for any $z\!\in\!\mathcal{X}$ and $w\!\in\!S$, and if $b_w$ satisfies the constraints $(c_\ge)$ and $(c_\le)$, then:
    \[\max\{0,f_w(y)-f_w(x)\}=b_w[f_w(y)-f_w(x)].\]
\end{prop}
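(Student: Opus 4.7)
The plan is a case analysis on the sign of $\delta := f_w(y) - f_w(x)$, exploiting the interplay between the integrality of $b_w$ (which is declared binary in the text introducing the linearization) and the range hypothesis $f_w(z) \in [0,1]$, which guarantees $\delta \in [-1,1]$. In each case, exactly one of the two linear constraints $(c_\le)$ or $(c_\ge)$ becomes tight enough, in combination with $b_w \in \{0,1\}$, to pin down the value of $b_w$ and force the advertised identity.

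First I would treat $\delta > 0$. Here the constraint $(c_\ge)$ gives $b_w \ge \delta > 0$, and since $b_w \in \{0,1\}$ this is only possible when $b_w = 1$. Then the right-hand side $b_w \delta$ equals $\delta = \max\{0,\delta\}$, as desired. Note that $(c_\le)$ is automatically satisfied because $\delta + 1 \ge 1 \ge b_w$.

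Next I would treat $\delta < 0$. Now the constraint $(c_\le)$ yields $b_w \le \delta + 1 < 1$, and by integrality $b_w = 0$. Consequently $b_w \delta = 0 = \max\{0,\delta\}$. The constraint $(c_\ge)$ is compatible since $0 \ge \delta$. Finally, the degenerate case $\delta = 0$ is immediate: both sides vanish regardless of the specific value taken by $b_w$, which is exactly the ``does not matter'' remark made just before the proposition.

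The main subtlety, and the one thing worth flagging, is that the range hypothesis $f_w(z) \in [0,1]$ is essential: it is what makes $\delta \in [-1,1]$ and therefore ensures that $(c_\le)$ and $(c_\ge)$ admit a feasible $b_w \in \{0,1\}$ in the first place (if $|\delta|$ could exceed $1$, one of the two constraints would collide with binarity and the linearization would be vacuous or infeasible). Once this compatibility is noted, the proof reduces to the three-way case split above, with no real calculation.
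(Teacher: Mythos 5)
Your proof is correct and follows essentially the same three-way case analysis on the sign of $f_w(y)-f_w(x)$ as the paper's own argument, using $(c_\ge)$ with binarity to force $b_w=1$ when the difference is positive and $(c_\le)$ to force $b_w=0$ when it is negative. Your added observation that the range hypothesis gives $\delta\in[-1,1]$ (ensuring feasibility of the constraints) is a correct refinement of the paper's remark, which contains a small slip in stating the range of the difference.
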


\begin{proof}
Let us denote by $d_w$ the value $f_w(y)-f_w(x)$ for any $w\!\in\!S$. First note that $d_w\!\in\![0,1], \forall w\!\in\!S$, because $f_w$ is such that $f_w(z)\!\in\![0,1], \forall z\!\in\!\mathcal{X}$. For any $w\!\in\!S$, three cases are possible: \emph{Case 1.} $w$ is such that $d_w\!>\!0$: $(c_\ge)$ becomes $b_w\!\ge\!d_w\!>\!0$, thus $b_w\!=\!1$ and we indeed have $b_wd_w\!=\!d_w\!\ge\!0$. \emph{Case 2.} $w$ is such that $d_w\!<\!0$: $(c_\le)$ becomes $b_w\!\le\!d_w\!+\!1\!<\!1$ and implies $b_w\!=\!0$ and thus $b_wd_w\!=\!0$. \emph{Case 3.} $w$ is such that $d_w\!=\!0$ then $b_wd_w\!=\!0, \forall b_w\in\{0,1\}$. In the three cases we have thus $b_wd_w\!=\!\max\{0,d_w\}$.
\end{proof}

The constraints $(c_\le)$ and $(c_\ge)$ are linear as $f_w(x)$ is linear in $u(x)\!=\!(u_1(x),\ldots,u_n(x))$. Nevertheless, using variables $b_w$ and their constraints in the formulation of $\mer(x, \mathcal{X}, S)$ gives a system of linear constraints with a quadratic objective function:
\[ 
\begin{array}{ll}
\max \frac{1}{|S|} \sum_{w \in S} b_w [f_w(y) - f_w(x)] \\
    ~~ b_w \le f_w(y)-f_w(x)+1 & \forall w \in S \\
    ~~ b_w \ge f_w(y)-f_w(x) & \forall w \in S \\
    ~~ b_w \in \{0, 1\} & \forall w \in S \\
    ~~ y \in \mathcal{X}
\end{array}
\]
The objective function is quadratic because the term $b_w f_w(y)$ is quadratic in variables $b_w$ and $y$. To linearize the program, we introduce a positive real variable $p_w$ for each $w\!\in\!S,$ that replace the product term $b_w f_w(y)$. Note that the term $b_wf_w(x)$ does not need linearization because solution $x$ is fixed in the MER computation. The obtained linear program is:
\[ 
(P_{\mer}):
\begin{array}{ll}
\max \frac{1}{|S|} \sum_{w \in S} [p_w - b_w f_w(x)] \\
    ~~ b_w \le f_w(y)-f_w(x)+1 & \forall w \in S \\
    ~~ b_w \ge f_w(y)-f_w(x) & \forall w \in S \\
    ~~ p_w \le b_w & \forall w \in S \\
    ~~ p_w \le f_w(y) & \forall w \in S \\
    ~~ p_w \ge b_w + f_w(y) - 1 & \forall w \in S \\
    ~~ b_w \in \{0, 1\} & \forall w \in S \\
    ~~ p_w \in \mathbb{R}^+ & \forall w \in S \\
    ~~ y \in \mathcal{X}
\end{array}
\]
It is easy to see that $p_w=b_wf_w(y)$ for all $w\!\in\!S$ thanks to the constraints on $p_w$. We indeed have $p_w\!=\!0$ when $b_w\!=\!0$ thanks to the constraint $p_w\!\le\!b_w$, and $p_w\!=\!f_w(y)$ when $b_w\!=\!1$ thanks to constraints $p_w\!\le\!f_w(y)$ and $p_w\!\ge\!b_w\!+\!f_w(y)\!-\!1\!=\!f_w(y)$. 

Overall, $2|S|$ variables are involved in the linearization of the expression $\frac{1}{|S|} \sum_{w \in S} \max\{0, [f_w(y) - f_w(x)\}$: $|S|$ binary variables $b_w$ are used to linearize the $\max\{0,.\}$ function, and $|S|$ real variables $p_w$ are used to linearize the product term $b_wf_w(y)$.

\subsection{Linear programming for MMER computation}
For computing $\mmer(\mathcal{X}, S)$, the objective function 
\[
\min_{x\in \mathcal{X}}\max_{y\in \mathcal{X}}\frac{1}{|S|}\sum_{w \in S} \max\{0,f_w(y)\!-\!f_w(x)\}
\]
can be linearized by using $|\mathcal{X}|$ constraints (standard linearization of a $\min\max$ objective function, where the max is taken over a finite set):
\[
\begin{array}{lr}
    \min t \\
    ~~ t \ge \frac{1}{|S|} \sum_{w \in S} \max\{0, f_w(y)\!-\!
    f_w(x)\} ~ \forall y \in \mathcal{X} & (*)\\
    ~~ t \in \mathbb{R}
\end{array}
\]
Note that computing the minmax expected regret over $\mathcal{X}$ requires the introduction of one binary variable $b_w^y$ for \emph{each} solution $y\!\in\! \mathcal{X}$, so that
\[
\max\{0,f_w(y)-f_w(x)\}=b_w^y(f_w(y)-f_w(x))
\]
for all $y \in \mathcal{X}$ (while computing the max expected regret of a \emph{given} solution $x$ only required the introduction of a \emph{single} binary variable $b_w$ such that $\max\{0,f_w(\hat{y})\!-\!f_w(x)\}\!=\!b_w(f_w(\hat{y})\!-\!f_w(x))$ for $\hat{y}\!\in\!\arg\max_{y \in \mathcal{X}}\per(x,y,S)$). 

Let us consider the following program $P_\mmer$, involving quadratic constraints:
\[
\begin{array}{lr}
\min t \\
        ~~ t \ge \frac{1}{|S|} \sum_{w \in S} b^y_w [f_w(y) - f_w(x)] & \forall y \in \mathcal{X}\\
        ~~ b^y_w \le f_w(y)-f_w(x)+1 & \forall w, y \in S \times \mathcal{X} \\
        ~~ b^y_w \ge f_w(y)-f_w(x) & \forall w, y \in S \times \mathcal{X} \\
        ~~ b^y_w \in \{0, 1\} & \forall w, y \in S \times \mathcal{X} \\
        ~~ x \in \mathcal{X} \\
        ~~ t \in \mathbb{R}
\end{array}
\]

\begin{prop}
    A solution $x^*\!\in\!\mathcal{X}$ optimizing $P_\mmer$ is such that:
    \[
    \mer(x^*, \mathcal{X}, S)\!=\!\mmer(\mathcal{X}, S).
    \]
\end{prop}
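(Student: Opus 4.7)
The plan is to combine Proposition~1 with the textbook epigraph linearization of a min-max objective to show that the feasible region and objective of $P_\mmer$ encode exactly the problem $\min_{x\in\mathcal{X}} \mer(x,\mathcal{X},S)$, from which the claim is immediate.

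First I would apply Proposition~1 pointwise to every pair $(w,y)\in S\times\mathcal{X}$. The constraints $(c_\le)$ and $(c_\ge)$ on $b^y_w$ appear in $P_\mmer$ for each such pair, and the blanket assumption $f_w(z)\in[0,1]$ is in force throughout Section~3. Proposition~1 therefore guarantees that in any feasible solution of $P_\mmer$ one has $b^y_w[f_w(y)-f_w(x)] = \max\{0,f_w(y)-f_w(x)\}$ for every $w\in S$ and every $y\in\mathcal{X}$. Summing over $w\in S$ and dividing by $|S|$, the right-hand side of the first family of constraints in $P_\mmer$ is exactly $\per(x,y,S)$, so those constraints collapse to $t\ge \per(x,y,S)$ for every $y\in\mathcal{X}$.

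Second I would observe that the family $\{t\ge \per(x,y,S) : y\in\mathcal{X}\}$ is equivalent to the single inequality $t\ge \max_{y\in\mathcal{X}}\per(x,y,S)=\mer(x,\mathcal{X},S)$. Since $t$ appears only in the objective and in these lower-bound constraints, at any optimum of $P_\mmer$ the value $t$ coincides with $\mer(x,\mathcal{X},S)$ for the chosen $x$. Minimizing $t$ over the remaining choice of $x\in\mathcal{X}$ is thus the same as minimizing $\mer(x,\mathcal{X},S)$ over $\mathcal{X}$, which by definition is $\mmer(\mathcal{X},S)$. Consequently any optimal $x^*$ of $P_\mmer$ satisfies $\mer(x^*,\mathcal{X},S)=\mmer(\mathcal{X},S)$.

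I do not expect any serious obstacle, since the proof is just a two-step reduction: first via Proposition~1, then via standard min-max linearization. The one point requiring mild attention is the degenerate case $f_w(y)=f_w(x)$ for some $w$, where $(c_\le)$ and $(c_\ge)$ do not pin $b^y_w$ to a unique value; however, the product $b^y_w[f_w(y)-f_w(x)]$ vanishes regardless of the chosen value, so the identification with $\max\{0,\cdot\}$ is preserved. The hypothesis $f_w(z)\in[0,1]$ (rather than mere nonnegativity) is exactly what makes the big-M-style inequalities $(c_\le)$ and $(c_\ge)$ tight enough to force the desired indicator behaviour of $b^y_w$, which is why invoking Proposition~1 is legitimate in the present setting.
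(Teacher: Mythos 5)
Your proof is correct and follows essentially the same route as the paper's: both invoke Proposition~1 to identify each bilinear term $b^y_w[f_w(y)-f_w(x)]$ with $\max\{0,f_w(y)-f_w(x)\}$, so that the constraints $(*)$ collapse to $t\ge \per(x,y,S)$ for all $y\in\mathcal{X}$, and then conclude by the standard epigraph argument that minimizing $t$ over $x\in\mathcal{X}$ yields $\mmer(\mathcal{X},S)$. Your explicit treatment of the degenerate case $f_w(y)=f_w(x)$ is a small but welcome addition that the paper handles only implicitly via Proposition~1.
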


\begin{proof} \label{prop:QPmmer}
    We denote by $t^*$ the optimal value of $P_\mmer$. We now prove that $t^*$ is equal to $\mmer(\mathcal{X}, S)$. For a given instance of $x$, constraint $(*)$ must be satisfied for \emph{any} possible instance of $y$. Thus, by Proposition~\ref{prop:max0}, we have that $t\!\ge\!\per(x,y,S)$ for all $y\!\in\! \mathcal{X}$ because:
    \[
    \abovedisplayskip = 8pt
    \frac{1}{|S|}\sum_{w \in S} b_w^y [f_w(y)\!-\!f_w(x)]\!=\!\per(x,y,S).
    \belowdisplayskip = 8pt
    \]
    It implies that $t\!\ge\!\max_y \per(x,y,S)\!=\!\mer(x,\mathcal{X},S)$. As the objective function is $\min t$, for each instance of $x$, the variable $t$ takes value $\mer(x,\mathcal{X},S)$.
    The $\min$ objective function implies that (1) $t\!=\!\mer(x, \mathcal{X}, S)$ for a given $x$. Finally, varying $x$ over $\mathcal{X}$, we can easily see that $t^*\!\le\!\mer(x, \mathcal{X}, S)\  \forall x\!\in\!\mathcal{X}$, and thus (2) $t^*\!=\!\mmer(\mathcal{X}, S)$. The result follows from (1) and (2).
\end{proof}

The quadratic terms $b^y_w f_w(x)$ are linearized by introducing $|S|\!\times\!|\mathcal{X}|$ positive real variables $p_w^y$:
\[
(P_\mathcal{X})\!:
\begin{array}{lr}
\min t \\
    ~ t \ge \frac{1}{|S|} \sum_{w \in S} b^y_w [f_w(y) - p^y_w]  & \forall y \in \mathcal{X} \\
    ~ b^y_w \le f_w(y)-f_w(x)+1 & \forall w, y \in S\!\times\!\mathcal{X} \\
    ~ b^y_w \ge f_w(y)-f_w(x) & \forall w, y \in S\!\times\!\mathcal{X} \\
    ~ p^y_w \le b^y_w & \forall w, y \in S\!\times\!\mathcal{X} \\
    ~ p^y_w \le f_w(x) & \forall w, y \in S\!\times\!\mathcal{X} \\
    ~ p^y_w \ge b^y_w + f_w(x) - 1 & \forall w, y \in S\!\times\!\mathcal{X} \\
    ~ b^y_w \in \{0, 1\} & \forall w, y \in S\!\times\!\mathcal{X} \\
    ~ p^y_w \in \mathbb{R}^+ & \forall w, y \in S\!\times\!\mathcal{X} \\
    ~ x \in \mathcal{X} \\
    ~ t\in \mathbb{R}
\end{array}
\]
One comes up with a mixed integer linear program $P_\mathcal{X}$ involving $|S|\!\times\!|\mathcal{X}|$ binary variables $b_w^y$, $|S|\!\times\!|\mathcal{X}|$ positive real variables $p_w^y$ and $|\mathcal{X}|\!+\!6\!\times\!|S|\!\times\!|\mathcal{X}|$ constraints, hence an exponential number of variables and constraints due to the combinatorial nature of the set $\mathcal{X}$. In the remainder of the section, we propose a method to overcome this issue.

\subsection{MMER computation method}
\label{subsec:mmercomputation}

The proposed method is based on mixed integer linear programming with dynamic generation of variables and constraints to compute $\mmer(\mathcal{X}, S)$, an optimal solution $x_S^*\!\in\!\arg\min_{x\in\mathcal{X}}\mer(x, \mathcal{X}, S)$ and its best challenger $\hat{y}_S\!\in\!\arg\max_{y\in\mathcal{X}}\per(x_S^*, y, S)$.

Let us first define a mixed integer linear program $P_A$ that contains only a subset of variables $b^y_w$ and $p^y_w$, and a subset of constraints of type $(*)$. Given a subset $A\!\subseteq\!\mathcal{X}$ of solutions, $P_A$ computes the minimax expected regret $\mmer_A(\mathcal{X}, S)$ defined by:
\[
\displaystyle\min_{x \in \mathcal{X}} \mer(x, A, S)\!\!=\!\!\min_{x\in\mathcal{X}} \max_{y\in A} \per(x,y,S).
\]
Put another way, $\mer(x, A, S)$ is the max expected regret of a solution $x\!\in\!\mathcal{X}$ w.r.t. solutions in $A$. More formally, $P_A$ is written as follows:
\[
(P_A)\!:
\begin{array}{lr}
\min t \\
    ~ t \ge \frac{1}{|S|} \sum_{w \in S} b^y_w [f_w(y) - p^y_w]  & \forall y \in A \\
    ~ b^y_w \le f_w(y)-f_w(x)+1 & \forall w, y \in S\!\times\!A \\
    ~ b^y_w \ge f_w(y)-f_w(x) & \forall w, y \in S\!\times\!A \\
    ~ p^y_w \le b^y_w & \forall w, y \in S\!\times\!A \\
    ~ p^y_w \le f_w(x) & \forall w, y \in S\!\times\!A \\
    ~ p^y_w \ge b^y_w + f_w(x) - 1 & \forall w, y \in S\!\times\!A \\
    ~ b^y_w \in \{0, 1\} & \forall w, y \in S\!\times\!A \\
    ~ p^y_w \in \mathbb{R}^+ & \forall w, y \in S\!\times\!A \\
    ~ x \in \mathcal{X} \\
    ~ t \in \mathbb{R}
\end{array}
\]
Note that $P_A$ now only involves $|S|\!\times\!|A|$ variables $b^y_w$, $|S|\!\times\!|A|$ variables $p^y_w$ and $|A|\!+\!6\!\times\!|S|\!\times\!|A|$ constraints.

\medskip
The algorithm we propose consists in alternatively solving $P_A$ and $P_\mer$.
Let $x_A$ (resp. $\hat{y}$) denote the optimal solution returned by solving $P_A$ (resp. $P_{\mer}$ for $x\!=\!x_A$). The algorithm starts with a small set $A$ of feasible solutions (see Section~\ref{subsec:IncrementalApproach} for details regarding the initialization), and then iteratively grows this set by adding to $A$ the best challenger $\hat{y}$ of $x_A$. Convergence is achieved when $P_\mer$ returns a solution $\hat{y}$ that already belongs to $A$, which implies that $\mmer_A(\mathcal{X}, S)\!=\!\mmer(\mathcal{X}, S)$. Algorithm \ref{algo:mmer} describes the procedure.

\begin{algorithm}[h!] 
\caption{$\mmer(\mathcal{X}, A, S)$}
\label{algo:mmer}

\KwIn{$\mathcal{X}$: combinatorial set of feasible solutions;\\ \hspace{0.95cm}$A \subseteq \mathcal{X}$: subset of challengers;\\
\hspace{0.95cm}$S$: sample of weighting vectors.}
\KwOut{$\mmer$ value, $\mmer$ solution and its best challenger for the considered sample}

$\hat{y} \leftarrow null$ \\
\Repeat{$\hat{y}\in A$}{
  \lIf{$\hat{y}\neq null$}{$A \leftarrow A \cup \{\hat{y}\}$} 
  $(\text{\emph{mmer}}_A, x_A) \leftarrow$ $\mmer_A(\mathcal{X}, S)$ (using $P_A$) \\
  $(\text{\emph{mer}\_}\,x_A,\hat{y}) \leftarrow \mer(x_A, \mathcal{X}, S)$ (using $P_\mer$) 
  \label{algoLine:challenger}
}

\Return \emph{mmer}$_A$, $x_A$, $\hat{y}$
\end{algorithm}

By abuse of notation, $\mmer_A(\mathcal{X}, S)$ is viewed in the algorithm as a procedure returning the couple consisting of the optimal value \emph{mmer}$_A$ of $P_A$ and the corresponding optimal solution $x_A$. Similarly, $\mer(x_A, \mathcal{X}, S)$ is viewed as a procedure returning the couple consisting of the optimal value \emph{\text{mer}\_$\,x_A$} of $P_\mer$ and the corresponding optimal solution $\hat{y}$.
At the termination of the algorithm, \emph{mmer}$_A$ corresponds to $\mmer(\mathcal{X}, S)$ and $x_A$ is the MMER solution (and $\hat{y}$ its best challenger).

\begin{prop}
    Algorithm \ref{algo:mmer} terminates and returns a minmax expected regret solution and its best challenger. \label{prop:validiteAlgo}
\end{prop}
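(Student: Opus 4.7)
The plan is to split the proof into two parts — termination and correctness — both of which follow from the monotonicity of $\mer(\cdot,A,S)$ in $A$ combined with the exit test $\hat{y}\in A$.

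For termination, at each iteration the newly computed $\hat{y}$ is either already in $A$, triggering the exit, or it is new and gets added to $A$ at the start of the next iteration. Since $A\subseteq\mathcal{X}$ and $\mathcal{X}$ is finite (being a combinatorial set of feasible solutions), the cardinality $|A|$ is strictly increasing across non-terminating iterations and bounded above by $|\mathcal{X}|$. The loop thus halts after at most $|\mathcal{X}|$ iterations.

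For correctness, the key ingredient is the monotonicity inequality: for every $x\in\mathcal{X}$ and $A\subseteq\mathcal{X}$,
\[
\mer(x,A,S)=\max_{y\in A}\per(x,y,S)\le\max_{y\in\mathcal{X}}\per(x,y,S)=\mer(x,\mathcal{X},S),
\]
hence $\mmer_A(\mathcal{X},S)\le\mmer(\mathcal{X},S)$ always holds. It remains to establish the reverse inequality at termination. By the correctness of $P_A$ (which follows by the same argument as already given for $P_{\mmer}$, restricting the constraints of type $(*)$ to $y\in A$), the returned $x_A$ satisfies $\mer(x_A,A,S)=\mmer_A(\mathcal{X},S)$. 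By the correctness of $P_{\mer}$ established previously, the returned $\hat{y}$ satisfies $\per(x_A,\hat{y},S)=\mer(x_A,\mathcal{X},S)$.

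The termination condition $\hat{y}\in A$ then yields the chain
\[
\mer(x_A,\mathcal{X},S)=\per(x_A,\hat{y},S)\le\max_{y\in A}\per(x_A,y,S)=\mer(x_A,A,S)\le\mer(x_A,\mathcal{X},S),
\]
forcing equality throughout. Consequently $\mmer_A(\mathcal{X},S)=\mer(x_A,A,S)=\mer(x_A,\mathcal{X},S)\ge\mmer(\mathcal{X},S)$, which, combined with the monotonicity inequality, gives $\mmer_A(\mathcal{X},S)=\mmer(\mathcal{X},S)$. Hence $x_A$ realizes the minimax expected regret on $\mathcal{X}$ and $\hat{y}$ is, by definition, its best challenger. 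I do not expect any real obstacle: the argument is a standard constraint-generation correctness proof, and the only subtlety is to recognize that the exit test precisely certifies the tightness of the relaxation $P_A$ with respect to the full program $P_{\mathcal{X}}$.
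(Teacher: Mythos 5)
Your proposal is correct and follows essentially the same route as the paper's proof: termination by finiteness of $\mathcal{X}$ and strict growth of $A$, then a sandwich argument showing that the exit test $\hat{y}\in A$ forces $\mer(x_A,A,S)=\mer(x_A,\mathcal{X},S)$, hence tightness of the relaxation $P_A$. If anything, your closing step (combining $\mmer_A(\mathcal{X},S)\le\mmer(\mathcal{X},S)$ with $\mmer_A(\mathcal{X},S)=\mer(x_A,\mathcal{X},S)\ge\mmer(\mathcal{X},S)$) is a cleaner way to conclude than the paper's final paragraph, which reaches the same conclusion via a slightly more roundabout argument about minimizing $\per(x,\hat{y},S)$ over $\mathcal{X}$.
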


\begin{proof}

First, it is easy to see that Algorithm \ref{algo:mmer} always terminates. Indeed, at every step of the algorithm, if the stopping condition is not satisfied then a new solution $\hat{y}\!\not\in\!A$ is added to $A$ and a new iteration is performed. In the worst case, all the solutions of $\mathcal{X}$ are added to the set $A$ and  the stopping condition is trivially satisfied. 

We now prove the validity of Algorithm \ref{algo:mmer}, i.e.:
\[
\mmer_A(\mathcal{X}, S)\!=\!\mmer(\mathcal{X}, S) \mbox{ if } \hat{y}\!\in\!A.
\]
Assume that $A\!\subsetneq\!\mathcal{X}$ (if $A\!=\!\mathcal{X}$ the equality $\mmer_A(\mathcal{X},S)\!=\!\mmer(\mathcal{X},S)$ is trivially true).
On the one hand, at any step of the algorithm, we have (1) $\emph{mmer}_A\!\le$ \emph{mer\_$\,x_A$} because $\mer(x_A,A,S)\!\le\!\mer(x_A,\mathcal{X},S)$. 
On the other hand, if $\hat{y}\!\in\!A$ then the constraint $t\!\ge\!\frac{1}{|S|} \sum_{w \in S} [f_w(\hat{y}) - f_w(x_A)]$ is satisfied  for $t\!=$ \emph{mmer}$_A$, i.e., $\emph{mmer}_A\!\ge\!\per(x_A, \hat{y}, S)$. 
As $\per(x_A, \hat{y}, S)\!=\!\mer(x_A, \mathcal{X}, S)$ by definition of $\hat{y}$, it implies that (2) $\emph{mmer}_A\!\ge\!\mer(x_A, \mathcal{X}, S)=$ \emph{mer\_$\,x_A$}. 
By (1) and (2), we conclude that \emph{mmer}$_A$ $=\emph{\text{mer}}$\_$\,x_A$.

Finally, \emph{mmer}$_A$ minimizes $\frac{1}{|S|}\sum_{w \in S} b_w [f_w(\hat{y}) - f_w(x)]$ for $x \in \mathcal{X}$, thus \emph{mmer}$_A$ minimizes $\per(x, \hat{y}, S)$ over $\mathcal{X}$. Consequently, \emph{\text{mer}\_$\,x_A$} $\le \per(x, \hat{y}, S)$ for all $x \in \mathcal{X}$ and then \emph{\text{mer}\_$\,x_A$} $\le \mer(x, \mathcal{X}, S), \forall x \in \mathcal{X}$. Thus, by definition of the MMER, we have \emph{\text{mer}\_$\,x_A$} $=\mmer(\mathcal{X}, S)$ and thereby $\emph{mmer}_A\!=\!\emph{mer}$\_$\,x_A\!=\!\mmer(\mathcal{X}, S)$.
\end{proof}

\subsection{Clustering the samples}
\label{subsec:clustering}
In order to decrease the computation times between queries, we propose to reduce the number of variables and constraints in $P_A$ by applying a clustering method on each sample $S$ drawn from density $p(w)$.
Let $C$ denote the set of cluster centers. The idea is to replace the $|S|$ weights by the $|C|$ cluster centers, the formula for the pairwise expected regret becoming:
\begin{equation}
    \per(x,y,C)=\sum_{c \in C}\rho_c\max\{0,f_{c}(y)-f_{c}(x)\}    
\end{equation}
\noindent where $\rho_c$ is the weight of the cluster center $c\!\in\!C$ and represents the proportion of weighting vectors of $S$ that are in the cluster of center $c$. The formulas for $\mer(x,\mathcal{X},C)$ and $\mmer(\mathcal{X},C)$ are adapted in the same way.





\subsection{Incremental decision making approach}
\label{subsec:IncrementalApproach}

As detailed in section \ref{sec:incremental}, the MMER computation is used to determine which query to ask at each step as well as to trigger the stopping condition. The whole incremental decision making procedure is summarized in Algorithm \ref{algo:eliciation}.
The set $A$ is heuristically defined as the set of $f_w$-optimal solutions for $w$ in $C$ (Line \ref{algoLine:defA}) but can be defined otherwise without any impact on the result in proposition \ref{prop:validiteAlgo}.
The variable \emph{mmer} (Line \ref{algoLine:mmer}) represents the current minmax expected regret value and is computed using Algorithm \ref{algo:mmer} by replacing the sample $S$ by the set of cluster centers $C$.  

\begin{algorithm}
  \DontPrintSemicolon
  \caption{Incremental Decision Making}
  \label{algo:eliciation}

  \KwIn{$\mathcal{X}$: combinatorial set of feasible solutions;\\
  \hspace{0.95cm}$p_0(w)$: prior density function.}
  \KwOut{$x^*$ : recommended solution.}
  
  $p(w) \leftarrow p_0(w); i \leftarrow 1$\\
  \Repeat{mmer \emph{stabilizes}}{
    $S \leftarrow$ sample drawn from $p(w)$ \label{algoLine:sampling} \\
    $C \leftarrow$ cluster centers of $S$ \\
    $A \leftarrow \{\arg \max_{x \in \mathcal{X}} f_w(x)|w \in C\}$ \label{algoLine:defA}\\
    $(\text{\emph{mmer}}, x^{(i)}, y^{(i)}) \leftarrow \mmer(\mathcal{X}, A, C)$ \label{algoLine:mmer}\\
    Ask the DM if $x^{(i)}$ is preferred to $y^{(i)}$ \\
    $a^{(i)} \leftarrow 1$ if the answer is yes and $0$ otherwise \\
    $p(w) \leftarrow p(w|a^{(i)})$ ~~~~~(see subsection \ref{subsec:bayesianUpsating}) \label{update}\\
    $i \leftarrow i + 1$
  }
  \Return $x^*$ selected in $\arg\min_{x \in \mathcal{X}} \mer(x , \mathcal{X}, C)$
\end{algorithm}


\section{Experimental results}

Algorithm \ref{algo:eliciation} has been implemented in Python using the \emph{SciPy} library for Gaussian sampling, the \emph{Scikit-Learn} library for the clustering operations\footnote{We used \emph{k-means} clustering.} and the \emph{gurobipy} module for solving the mixed integer linear programs. The numerical tests have been carried out on $50$ randomly generated instances of the multi-objective \emph{knapsack} and \emph{allocation} problems.  
For all tests we used an Intel(R) Core(TM) i7-4790 CPU with 15GB of RAM.

\paragraph{Multi-objective Knapsack Problem (MKP)}

This vector optimization problem is formulated as $\max z\!=\!Ux$ subject to $\sum_{i=1}^p\alpha_i x_i\le \gamma$, where $U$ is an $n\!\times\!p$ matrix of general term $u_{ki}$ representing the utility of item $i\!\in\!\{1,\ldots,p\}$ w.r.t objective $k\!\!\in\!\!\{1,\ldots,n\}$, $x\!\!=\!\!(x_1,\ldots,x_p)^T$ is a vector of binary decision variables such that $x_i\!=\!1$ if item $i$ is selected and $x_i\!=\!0$ otherwise, $\alpha_i$ is the weight of item $i$ and $\gamma$ is the knapsack's capacity. The set of feasible knapsacks is $\mathcal{X}\!=\!\{x\!\in\!\{0,1\}^p | \sum_{i=1}^{p} \alpha_i x_i\!\le\!\gamma\}$, and the performance vector $z\!\in\!\mathbb{R}^n$ associated to a solution $x$ is $z\!=\!Ux$. 

To simulate elicitation sessions, we consider the problem $\max_{x\in\mathcal{X}} f_w(x)$ where $f_w(x)\!=\!\sum_kw_k\sum_iu_{ki}x_i$. The weighting vector $w$ in $W=\{w\!\in\![0,1]^n:\sum_kw_k\!=\!1\}$ is initially unknown.
We generated instances of MKP for $n\!=\!5$ objectives and $p\!=\!100$ items. 
Every item $i$ has a positive weight $\alpha_i$ uniformly drawn in $\{1,\ldots,20\}$, and $\gamma\!=\!\frac{1}{2}\sum_{k=1}^{100}\alpha_k$. Utilities $u_{ki}$ are uniformly drawn in $[0, \frac{1}{p}]$ to make sure that $f_w(x)\!\in\![0,1], \forall x\!\in\!\mathcal{X}$.

\paragraph{Multi-objective Allocation Problem (MAP)} 

Given $m$ agents, $r\!<\!m$ \emph{shareable} resources, and $b$ a bound on the number of agents that can be assigned to a resource, the set $\mathcal{X}$ of feasible allocations of resources to agents consists of binary matrices $X$ of general term $x_{ij}$ such that $\sum_{j=1}^rx_{ij}\!=\!1, \forall i\!\in\!\{1,\ldots,m\}$ and $\sum_{i=1}^mx_{ij}\!\le\!b, \forall j\!\in\!\{1,\ldots,r\}$, where $x_{ij}$ are decision variables such that $x_{ij}\!=\!1$ if agent $i$ is assigned resource $j$, and $x_{ij}\!=\!0$ otherwise. The cost of an allocation $x$ w.r.t. criterion $k$ is defined by $z_k\!=\!\sum_{i=1}^m\sum_{j=1}^r c_{ij}^kx_{ij}$, where $c_{ij}^k$ is the cost of assigning agent $i$ to resource $j$ w.r.t. criterion $k$.



We consider the problem $\min_{x\in\mathcal{X}} f_w(x)$ where $f_w(x)\!=\!\sum_k w_k\sum_i\sum_jc_{ij}^kx_{ij}$ and $w\!\in\!W$ is initially unknown. 
For the tests, we generated instances with $n\!=\!5$ criteria, $m\!=\!50$ agents, $r\!=\!5$ resources and a bound $b\!=\!15$ on the number of agents that can be assigned to a resource. The values $c_{ij}^k$ are randomly generated in $[0,20]$, then normalized ($c_{ij}^k/\sum_i\sum_j c_{ij}^k$) to ensure that $f_w(x)\!\in\![0,1], \forall x\!\in\!\mathcal{X}$.

\paragraph{Simulation of the DM's answers} In order to simulate the interactions with the DM, for each instance, the hidden weighting vectors $w$ are uniformly drawn in the canonical basis of $\mathbb{R}^n$ (the more vector $w$ is unbalanced, the worse the initial recommendation). 
At each query, the answer is obtained using the response model given in Section \ref{subsec:bayesianUpsating}, i.e., for query $i$, the answer depends on the sign of $z^{(i)}\!=\!w^Td^{(i)}\!+\!\varepsilon^{(i)}$, where $\varepsilon^{(i)}\!\sim\!\mathcal{N}(0, \sigma^2)$. We used different values of $\sigma$ to evaluate the tolerance of the approach to wrong answers. We set $\sigma\!=\!0$ to simulate a DM that is perfectly reliable in her answers. The strictly positive values are used to simulate a DM that may be inconsistent in her answers. For MKP (resp. MAP), setting $\sigma\!=\!0.01$ led to $16\%$ (resp. $14\%$) of wrong answers, while $\sigma\!=\!0.02$ led to $24\%$ (resp. $21\%$) of wrong answers.


\paragraph{Parameter settings in algorithms}
The prior density in Algorithm~\ref{algo:mmer} is set to $\mathcal{N}((10, \dots 10)^T, 100I_5)$, where $I_5$ is the identity matrix $5\!\times\!5$, so that the distribution is rather flat. At each step of Algorithm~\ref{algo:eliciation}, a new sample $S$ of 100 weighting vectors is generated; the vectors $w\!\in\!S$ are normalized and partitioned into $20$ clusters. This number of clusters has been chosen empirically after preliminary numerical tests: considering the entire sample or using more than 20 clusters led to higher computation times and did not offer a significant improvement on the quality of the recommendations. Last but not least, we stopped the algorithm after 15 queries if the termination condition was not fulfilled before.


\paragraph{Illustrative example} Before coming to the presentation of the numerical results, let us first illustrate the progress of the elicitation procedure on the following example: we applied Algorithm \ref{algo:eliciation} on a randomly generated instance of MKP with $3$ agents, $100$ items, a hidden weighting vector $w\!=\!(0, 1, 0)$, and we set $\sigma\!=\!0.02$, which led to an error rate of $20\%$. Figure \ref{fig:samples} illustrates the convergence of the generated samples of weighting vectors (Line \ref{algoLine:sampling} of Algorithm~\ref{algo:eliciation}) toward the hidden weight during the execution of the algorithm. As the weighting vectors are normalized, two components are enough to characterize them. Every graph shows the sample drawn at a given step of the algorithm: before starting the elicitation procedure (Query 0), after query $3$ and query $10$. 
\begin{figure}[hbtp]
    \centering
    \begin{tabular}{ccc}
        \includegraphics[scale=0.25]{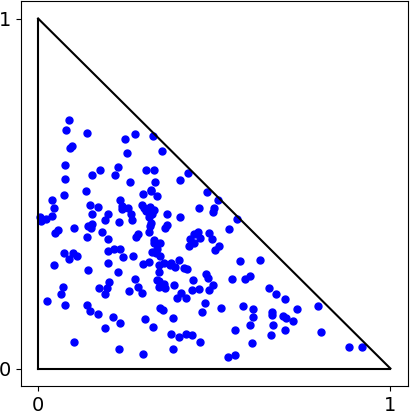} &  
        \includegraphics[scale=0.25]{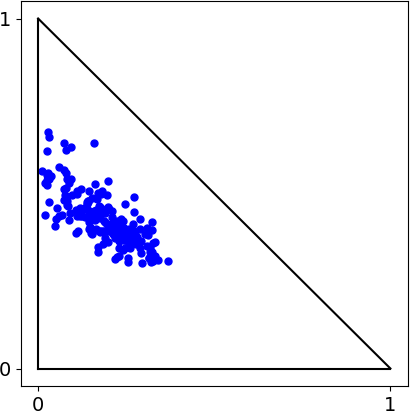} &  
        \includegraphics[scale=0.25]{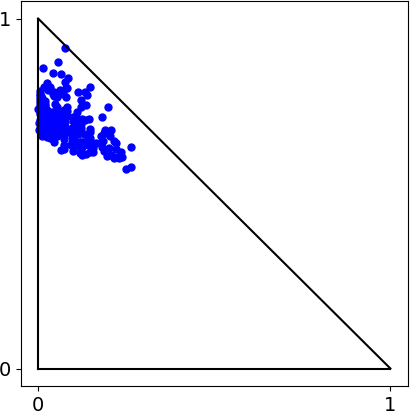} \\
        Query 0 & Query 3 & Query 10
    \end{tabular}
    \caption{Evolution of the samples toward the hidden weight.}
    \label{fig:samples}
\end{figure}

\paragraph{Analysis of the results} 
We first evaluated the efficiency of Algorithm \ref{algo:eliciation} according to the value of $\sigma$. 
We observed the evolution of the quality of the recommendation (the minimax expected regret solution) after every query. The quality of a recommendation $x^*$ is defined by the score $s_{w_h}(x^*)\!=\!f_{w_h}(x^*)/f_{w_h}(x_h)$ for MKP and by $s_{w_h}(x^*)\!=\!\frac{1-f_{w_h}(x^*)}{1-f_{w_h}(x_h)}$ for MAP, where $w_h$ is the hidden weighting vector and $x_h$ is an optimal solution for $w_h$.


The obtained curves for MKP are given 
in Figure~\ref{fig:score_ext}. 
We observe that the quality of the recommendation (measured by the score function $s_{w_h}$) is of course negatively impacted when $\sigma$ increases. However, the score of the recommendation at the termination of Algorithm~\ref{algo:eliciation} is $\ge0.98$ for $\sigma\!\in\!\{0, 0.01\}$, and $\ge\!0.96$ for $\sigma\!=\!0.02$. Regarding the computation times, the mean time between two queries over the $50$ instances was around $4$ seconds.


\begin{table*}[hbtp]
\begin{tabular}{ccc}
    \begin{minipage}{0.4\textwidth}
        \centering\includegraphics[scale=0.45]{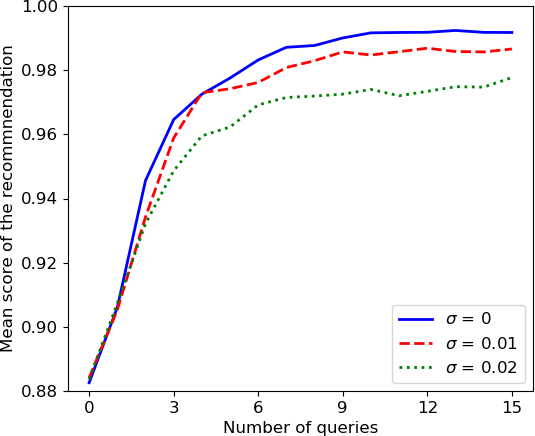}
        \captionof{figure}{\label{fig:score_ext}MKP - Mean score vs. queries}
    \end{minipage}
&
~
&
    \begin{minipage}{0.4\textwidth}
        \centering\includegraphics[scale=0.475]{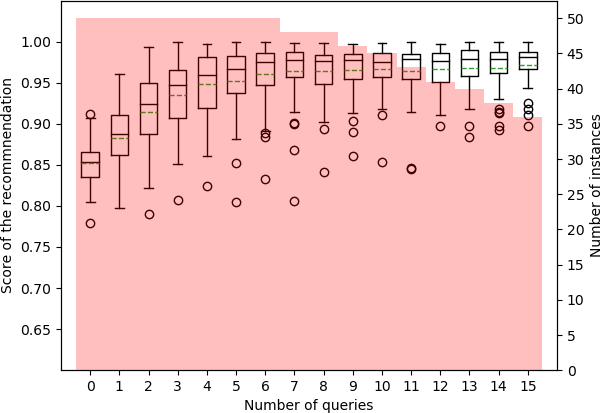}
        \captionof{figure}{\label{fig:box_prob}MKP - Algorithm \ref{algo:eliciation} - Score vs. queries}
    \end{minipage}\\
& \\
    \begin{minipage}{0.4\textwidth}
        \centering
            \includegraphics[scale=1.29]{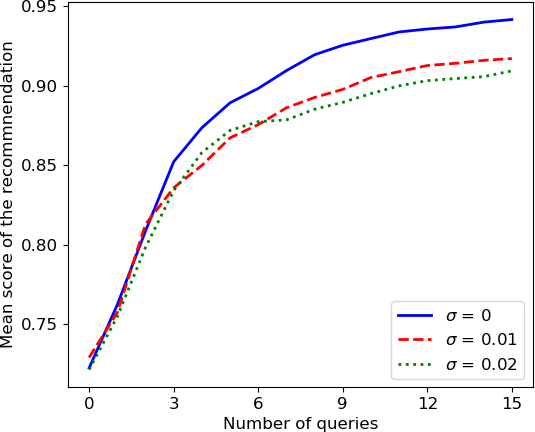}
        \captionof{figure}{\label{fig:score_alloc}MAP - Mean score vs. queries}
    \end{minipage}
&
~
&
    \begin{minipage}{0.4\textwidth}
        \centering\includegraphics[scale=0.475]{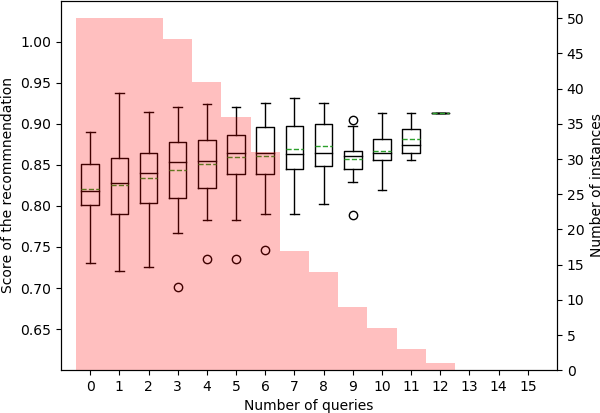}
        \captionof{figure}{\label{fig:box_det}MKP - Deterministic algorithm \cite{BourdacheP19} - Score vs. queries}
    \end{minipage}
\end{tabular}
\end{table*}

Concerning RAP, the curves are given in Figure \ref{fig:score_alloc}. 
As for MKP, we observe a negative impact on the quality of the recommendation when $\sigma$ increases. Yet, the score of the recommendation is, in the worst case ($\sigma\!=\!0.01$), around $0.86$ from query $4$. 
The algorithm converges very quickly for all $\sigma$ values, which may be explained by the fact that, for extreme weights $w$, there is a large number of $f_{w}$-optimal solutions; $w$ is indeed such that $w_{i}\!=\!1$ for a given $i$ and all other components takes value $0$, thus, any assignment $x$ such that all the agents are assigned to resources other than resource $i$ are such that $f_{w}(x)\!=\!0$.
Regarding the computation times, the mean time between two queries over the $50$ instances was around $0.9$ seconds for $\sigma\!\in\!\{0, 0.05\}$, and around $1.7$ seconds for $\sigma\!=\!0.1$.


Finally, we compared the performances of Algorithm \ref{algo:eliciation} on MKP to the performance of a deterministic approach that does not take into account the possible errors in responses \cite{BourdacheP19} (approach based on the systematic reduction of the parameter space by minimizing the minimax regret at each step). The aim was to evaluate how much the DM's inconsistencies in her answers impact the two procedures. 
In this purpose, we set $\sigma\!=\!0.02$. 
The obtained results are given in the box plots of Figure \ref{fig:box_prob} for Algorithm \ref{algo:eliciation}, and of Figure \ref{fig:box_det} for the deterministic algorithm. In these figures, the box plots give, for any given question, the score of the recommendation for every considered instance (the bottom and top bands of the whiskers are the minimum and maximum scores over the 50 instances, the bottom and top bands of the boxes are the first and third quartiles, the band in the box is the median, the dotted band is the average, and the circles are isolated values). The histogram gives the number of observed values for every query; the bin $i$ indeed gives the number of instances for which query $i$ is reached before the stopping condition is fulfilled.

Figures \ref{fig:box_prob} and \ref{fig:box_det} show the interest of considering our Bayesian elicitation procedure in comparison with a deterministic approach. Indeed, the deterministic approach converges quickly and requires less queries than Algorithm \ref{algo:eliciation}; however, the score of the current recommendation at every step of the algorithm does not exceed $0.94$ for any considered instance and is $\le\!0.9$ for $75\%$ of the instances. In contrast, for Algorithm \ref{algo:eliciation}, the score of the current recommendation is $\ge\!0.95$ in $75\%$ of the instances from query $6$.


\section{Conclusion}

We introduced in this paper a Bayesian incremental preference elicitation approach for solving multiobjective combinatorial optimization problems when the preferences of the decision maker are represented by an aggregation function whose parameters are initially unknown. The proposed approach deals with the possibility of inconsistencies in the decision maker's answers to pairwise preference queries. Our approach uses a columns and constraints generation solution method for the computation of expected regrets. The approach is general and can be applied to any problem having an efficient mixed integer linear programming formulation. An interesting research direction would be to refine the approach in the case of non-linear aggregation functions. The approach is indeed compatible with such aggregation functions provided they can be linearized (e.g., the linearization of the ordered weighted averages \cite{Ogryczak03}), but the subsequent linear formulations often involve many additional variables and constraints, thus the need for an optimization.

\bibliographystyle{apalike}
\bibliography{biblio}

\end{document}